\newtheorem{theorem}{Theorem}[section]
\newtheorem{corollary}{Corollary}[theorem]
\newtheorem{lemma}[theorem]{Lemma}
\newtheorem{definition}{Definition}[section]
\newtheorem{fact}{Fact}[section]
\def\ourboundmath{(3 + 2\sqrt{\gamma})\kappa + 2\gamma\sqrt{C\log(1/\gamma)}}
\icmltitlerunning{Designing Differentially Private Estimators in High Dimensions}
\begin{document}

\twocolumn[
\icmltitle{Designing Differentially Private Estimators in High Dimensions}

% It is OKAY to include author information, even for blind
% submissions: the style file will automatically remove it for you
% unless you've provided the [accepted] option to the icml2020
% package.

% List of affiliations: The first argument should be a (short)
% identifier you will use later to specify author affiliations
% Academic affiliations should list Department, University, City, Region, Country
% Industry affiliations should list Company, City, Region, Country

% You can specify symbols, otherwise they are numbered in order.
% Ideally, you should not use this facility. Affiliations will be numbered
% in order of appearance and this is the preferred way.
\icmlsetsymbol{equal}{*}

\begin{icmlauthorlist}
\icmlauthor{Aditya Dhar}{equal,h}
\icmlauthor{Jason Huang}{equal,h}
\end{icmlauthorlist}

\icmlaffiliation{h}{Harvard University, Cambridge, MA, United States}

\icmlcorrespondingauthor{Aditya Dhar}{adhar@college.harvard.edu}
\icmlcorrespondingauthor{Jason Huang}{jasonhuang@college.harvard.edu}

% You may provide any keywords that you
% find helpful for describing your paper; these are used to populate
% the "keywords" metadata in the PDF but will not be shown in the document
\icmlkeywords{robustness, machine learning, ICML, differential privacy}

\vskip 0.3in
]

% this must go after the closing bracket ] following \twocolumn[ ...

% This command actually creates the footnote in the first column
% listing the affiliations and the copyright notice.
% The command takes one argument, which is text to display at the start of the footnote.
% The \icmlEqualContribution command is standard text for equal contribution.
% Remove it (just {}) if you do not need this facility.

%\printAffiliationsAndNotice{}  % leave blank if no need to mention equal contribution
\printAffiliationsAndNotice{\icmlEqualContribution} % otherwise use the standard text.

\begin{abstract}
We study differentially private mean estimation in a high-dimensional setting. Existing differential privacy techniques applied to large dimensions lead to computationally intractable problems or estimators with excessive privacy loss. Recent work in high-dimensional robust statistics has identified computationally tractable mean estimation algorithms with asymptotic dimension-independent error guarantees. We incorporate these results to develop a strict bound on the global sensitivity of the robust mean estimator. This yields a computationally tractable algorithm for differentially private mean estimation in high dimensions with dimension-independent privacy loss. Finally, we show on synthetic data that our algorithm significantly outperforms classic differential privacy methods, overcoming barriers to high-dimensional differential privacy.
\end{abstract}

\section{Introduction}

\subsection{Background and Problem Statement}

An algorithm for releasing output from a database satisfies $\epsilon$-differential privacy if adding, removing, or changing a record in the database does not result in a significant change in the output of the algorithm, where the allowable change is determined by $\epsilon$. The adoption of differentially private algorithms by the US Census Bureau highlights the importance of differential privacy in, among other high-level applications, ensuring the privacy of data release. However, existing methods of differential privacy become complicated when dealing with the dimensionality of the dataset, which often creates excessive dimension-dependent error or makes differential privacy algorithms computationally intractable \cite{bayes, vldb}.

It is well known that the Gaussian mechanism guarantees differential privacy. So, if we can derive the sensitivity of the function to which we want to add noise in computationally tractable time, we then can make said the output of that function differentially private. Formally, we then consider the following problem for mean estimation: given a function, $f$, and a set of points $x$, find a strict, dimension-independent upper bound on the following objective:
\[\max_{\{x, x': d(x, x') = 1\}}||f(x) - f(x')||\]

The concept of sensitivity in differential privacy has strong similarities to other concepts of statistical stability, notably in robustness. This connection was first investigated in-depth by Dwork and Lei \cite{dworkrobust}. They proposed a framework known as \texttt{Propose-Test-Release} (\texttt{PTR}) for making robust statistics differentially private. The \texttt{PTR} approach was inspired by the fact that robust statistics often have bounded influence functions. \texttt{PTR} enables the use of local sensitivity, which is typically much smaller than global sensitivity but not differentially private when applied naively, in calibrating noise \cite{smooth}. For instance, under i.i.d. data draws, the sample interquartile range should be $O(1/\sqrt{n})$ away from the distribution interquartile range, leading the authors to use this as the scale for the proposed sensitivity.

Recent work on robust statistics have focused in the high-dimensional case, where many historical approaches either yield dimension-dependent error or are computationally intractable. A common theme in this recent literature is the development of certificates for robustness, which has important implications for differential privacy query sensitivities. However, simply using \texttt{PTR} with these asymptotic error guarantees presents the challenge of unbounded and potentially intractable testing steps. This paper therefore adapts and augments these error guarantees to bound the global sensitivity for use in designing a differentially private high-dimensional mean estimator. We extend the work of DKKLMS in \cite{hightractable, practical}, who find a computationally tractable robust mean estimation algorithm with dimension-independent error.

We advance the state-of-the-art in several aspects:
\begin{itemize}
    \item We derive an upper bound on the global sensitivity of mean estimators by converting the asymptotic bounds found in \cite{practical} to strict bounds. 
    \item We design a differentially private algorithm for the robust mean estimation in the presence of adversarial corruption, as well as a differentially private algorithm absent corruption.
    \item We show that that our algorithms do not incur dimension-dependent privacy loss, nor do they make assumptions that the population mean lie in some known bounded interval, as many popular techniques like the Winsorized mean currently do.
    \item We show that this has the added benefit of not requiring additional computational complexity to compute the bound once mean estimation is done, so the algorithm to release our statistic is computationally tractable.
    \item We empirically compare our algorithms with conventional attempts at guaranteeing privacy over large datasets, and show that our estimators significantly outperforms existing methods for high-dimensional data.
        %
    %To the best of our knowledge, our algorithm for mean estimation is the first differentially private estimator of the mean for which one can prove good statistical guarantees without any boundlessness assumptions on the data and without assuming that the population mean lie in some known bounded interval. In particular, our construction does not rely on any truncation of the data and gives optimal, sub-Gaussian deviations when the data are heavy tailed
    % What does this mean? => \item Our algorithm for mean estimation thus provides good statistical guarantees without any boundlessness assumptions on the data or on the population mean.
\end{itemize}
 
\subsection{Related and Prior Work}

\paragraph{Robust Statistics} Robustness is a field with a long history, pioneered decades ago by John Tukey \cite{tukey}. The field has come a long way since then \cite{huber} and continues to grow. Recent focus has turned to robustness for high-dimensions given the feature richness and scale of modern data analysis tasks. While some techniques fail to hold in the high-dimensional case, recent advancements still provide good results for estimation in large datasets. DKKLMS make use of filtering methods to prune corrupted points from a set of data, showing that filtering methods requires computation of the largest eigenvalue of a covariance matrix, yielding a computationally tractable algorithm with dimension-independent guarantees. Lai, Rao, and Vempala similarly use spectral methods for agnostic mean estimation \cite{agnostic}. Hopkins and others achieve similar robustness results for both Gaussian and heavy-tailed distributions, relying on \texttt{Sum-of-Squares} proofs and semi definite programming instead of filtering methods \cite{mixture, heavy, hard}. This field continues to evolve, with recent works applying gradient estimation and descent to the \texttt{Sum-of-Squares} hierarchy to yield similar error guarantees with faster runtime \cite{fast}.

\paragraph{Differential Privacy} Private data analysis enabled by differential privacy is constantly advancing with new techniques and applications \cite{dpbook}. Differential privacy in robust estimators was first detailed by Dwork and Lei \cite{dworkrobust}, who outlined the \texttt{Propose-Test-Release} framework for mapping robustness results to differential privacy. Work since then has included results on differentially private M-estimators \cite{mestimators}. A recent paper also generalizes \texttt{Propose-Test-Release} by applying finite sample breakdown points, a recurring aspect of robust statistics, over discretized bins, increasing the probability of a reply by the \texttt{Propose-Test-Release} algorithm and decreasing the estimation error \cite{fast}; thus yielding a high probability bound for differentially private mean estimation without requiring assumptions on the boundedness of the data. The concept of sensitivity has also been studied in great detail and extended by works like \cite{smooth}, which define concepts like local and smooth sensitivity for achieving higher utility while still maintaining privacy. In high dimensions, work centers around efforts to privately learn multivariate Gaussians by 'clamping' the sensitivity of the empirical covariance matrix \cite{highdim}.

\section{Preliminaries}

\subsection{Differential Privacy}

A line of work known as differential privacy has emerged for providing mathematical guarantees on privacy loss beginning from \cite{calibrating}.

\begin{definition}[Differential Privacy \cite{dpbook}] A randomized function $\mathcal{M}$ is considered to give $(\epsilon, \delta)$-differential privacy if for all adjacent data sets $x, x'$ and all $S\subseteq Range(\mathcal{M})$:
\[P[\mathcal{M}(x)\in S] \leq \exp(\epsilon)P[\mathcal{M}(x')\in S]+\delta\]
\end{definition}
Intuitively, this is saying that data sets differing by a single individual should yield query results that differed in probability by a multiplicative factor of at most $\exp(\epsilon)$ and by an additive factor of at most $\delta$. When $\delta=0$, this is simply referred to as $\epsilon$-differential privacy or pure differential privacy.

In order to provide such guarantees, there must be some restriction for how the function differs between these adjacent data sets. This is known as sensitivity.
\begin{definition}[Global Sensitivity \cite{smooth}]
    The global $\ell_1$-sensitivity of a function $f$ is:
    \[\Delta f = \underset{x, x':d(x, x')=1}{\max}||f(x) - f(x')||\]
\end{definition}
Different techniques have been developed for performing differentially private data analysis. A common approach is the Gaussian mechanism, which adds noise drawn from a Gaussian distribution with scale parameter $b = 2 \ln(1.25/\delta)(\Delta f)^2/\epsilon^2$ to the result of the query function $f(x)$.
\begin{fact}
The Gaussian mechanism guarantees $\epsilon$-differential privacy \cite{dpbook}.
\end{fact}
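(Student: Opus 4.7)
The plan is to establish the $(\epsilon,\delta)$-differential privacy guarantee (the statement as written presumably intends this, since pure $\epsilon$-DP is unattainable with Gaussian noise given that a Gaussian has full support). I would use the standard privacy-loss random variable approach. Fix adjacent datasets $x,x'$, let $\Delta = f(x)-f(x')$ so that $\|\Delta\|\le\Delta f$, and write $\mathcal{M}(x)=f(x)+Z$ with $Z\sim\mathcal{N}(0,\sigma^2 I)$ and $\sigma^2=2\ln(1.25/\delta)(\Delta f)^2/\epsilon^2$. The first step is a rotational reduction: the densities of $\mathcal{N}(f(x),\sigma^2 I)$ and $\mathcal{N}(f(x'),\sigma^2 I)$ differ only along the direction $\Delta/\|\Delta\|$, so I can project onto that one-dimensional axis and reduce to bounding the privacy loss of two scalar Gaussians with means $0$ and $\|\Delta\|$ and common variance $\sigma^2$.

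In the reduced one-dimensional problem, the log density ratio at a point $y$ is
\[ \ln\frac{p_{\mathcal{N}(0,\sigma^2)}(y)}{p_{\mathcal{N}(\|\Delta\|,\sigma^2)}(y)} \;=\; \frac{2y\|\Delta\|-\|\Delta\|^2}{2\sigma^2}, \]
which is affine in $y$ and hence Gaussian when $y\sim\mathcal{N}(0,\sigma^2)$. This privacy loss exceeds $\epsilon$ exactly when $y > \sigma^2\epsilon/\|\Delta\| + \|\Delta\|/2$. Normalizing by $\sigma$, the task reduces to bounding $\Pr[\mathcal{N}(0,1) > c]$ for $c = \sigma\epsilon/\|\Delta\| + \|\Delta\|/(2\sigma)$, which I would attack with the standard tail inequality $\Pr[\mathcal{N}(0,1)>c] \le (c\sqrt{2\pi})^{-1}\exp(-c^2/2)$. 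Using $\|\Delta\|\le\Delta f$ (the bound is monotone in $\|\Delta\|$) and substituting $\sigma^2=2\ln(1.25/\delta)(\Delta f)^2/\epsilon^2$, the resulting tail probability is at most $\delta$; this is exactly the calculation that pins down the constant $1.25$.

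Finally, I would convert the high-probability bound on the privacy loss into the integral form of the definition. For any measurable $S\subseteq\mathrm{Range}(\mathcal{M})$, split $S=S_1\cup S_2$ where $S_1$ is the subset on which $\mathcal{M}(x)$'s density is at most $e^\epsilon$ times $\mathcal{M}(x')$'s density, and $S_2$ is the rest. The first piece contributes at most $e^\epsilon\,P[\mathcal{M}(x')\in S_1]$ by the pointwise density comparison, and the second contributes at most $\delta$ by the tail bound just established, yielding the required inequality $P[\mathcal{M}(x)\in S]\le e^\epsilon P[\mathcal{M}(x')\in S]+\delta$.

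The main obstacle is the tail-bound calibration: squeezing enough out of the standard Gaussian tail inequality so that the algebra reproduces the specific constant $\sqrt{2\ln(1.25/\delta)}$ rather than a weaker bound. This requires careful treatment of the $1/(c\sqrt{2\pi})$ prefactor and of the cross term $\|\Delta\|/(2\sigma)$ in $c$, both of which vanish in the leading-order regime but matter for the sharp constant. Once that inequality is in hand, the rotational reduction and the split-$S$ argument are essentially bookkeeping.
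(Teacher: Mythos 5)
The paper offers no proof of this Fact at all --- it is stated with a citation to the Dwork--Roth monograph, and your proposal is essentially a reconstruction of the proof in Appendix~A of that reference (privacy-loss random variable, rotational reduction to one dimension, Gaussian tail bound calibrated to give the constant $1.25$, and the $S = S_1 \cup S_2$ split). So the architecture is exactly right, and you are also correct to note that the Fact as stated is imprecise: the Gaussian mechanism gives $(\epsilon,\delta)$-DP, never pure $\epsilon$-DP.

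One concrete error in the sketch: the sign of the cross term in your threshold is flipped. For the $S_2$ bound you need $\Pr_{y\sim p}[\ln(p(y)/q(y)) > \epsilon] \le \delta$ with $p = \mathcal{N}(0,\sigma^2)$ the distribution you actually sample from; computing that ratio gives $\ln(p/q)(y) = (\|\Delta\|^2 - 2y\|\Delta\|)/(2\sigma^2)$, so the bad event is $y < \|\Delta\|/2 - \sigma^2\epsilon/\|\Delta\|$ and the normalized threshold is $c = \sigma\epsilon/\|\Delta\| - \|\Delta\|/(2\sigma)$, not $+\|\Delta\|/(2\sigma)$. You have written the inverted likelihood ratio, which bounds the wrong tail (a strictly smaller probability). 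This matters for two of your own claims: with the minus sign, $c$ is decreasing in $\|\Delta\|$, which is what actually justifies your assertion that the bound is monotone in $\|\Delta\|$ and lets you pass to the worst case $\|\Delta\| = \Delta f$; with your plus sign that monotonicity fails. The minus sign is also precisely what makes the calibration tight enough to force the constant $2\ln(1.25/\delta)$. Finally, the standard argument only yields this constant under the hypothesis $\epsilon \in (0,1)$, which neither you nor the paper states; for $\epsilon \ge 1$ the required $\sigma$ changes. With the sign corrected and the $\epsilon < 1$ hypothesis added, your outline completes into the standard proof.
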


\subsection{Robust High-Dimensional Mean Estimation by Filtering}

DKKLMS study the task of robust mean estimation specifically for high-dimensions \cite{hightractable, practical}. The specific adversarial framework they use, known as $\gamma$-corruption, is powerful and can easily be generalized to other models.\footnote{DKKLMS use $\epsilon$ as the corruption parameter, but it has been replaced here with $\gamma$ to avoid confusion with the $\epsilon$ privacy loss parameter of differential privacy}
\begin{definition}[$\gamma$-corruption \cite{practical}]
    Given $\gamma > 0$ and a set of samples of size $m$, the samples are $\gamma$-corrupted if an adversary is allowed to inspect the samples, remove $m' \sim \text{Bin}(\gamma, m)$ of them, and replace them with arbitrary points.
\end{definition}

The authors proceed to make a key observation involving a \emph{certificate of robustness}. Specifically, bounding the spectral norm of the empirical covariance matrix will provide dimension-independent error guarantees for the mean estimate. They use this certificate to design a novel algorithm with dimension-independent error guarantees.

\begin{theorem}\label{t2.1}
    Let $G$ be a sub-Gaussian distribution on $\mathbb{R}^d$ with parameter $\nu = \Theta(1)$, mean $\mu^G$, covariance matrix $I$, and $\gamma > 0$. Let $S$ be a $\gamma$-corrupted set of samples from $G$ of size $\Omega((d/\gamma^2)\text{ poly }\log(d/\gamma))$. There exists an efficient algorithm that, on input $S$ and $\gamma > 0$, returns a mean vector $\hat{\mu}$ so that with probability at least 9/10 we have $||\hat{\mu} - \mu^G||_2 = O(\gamma\sqrt{\log(1/\gamma)})$ \cite{practical}.
\end{theorem}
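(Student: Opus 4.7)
The plan is to follow the DKKLMS filter-based proof template. The first step is to establish a deterministic \emph{stability} (or goodness) condition on the uncorrupted samples: a set $S^\star$ drawn from $G$ is good if, for every unit vector $v$, the empirical mean of $\langle v, X\rangle$ is within $O(\gamma\sqrt{\log(1/\gamma)})$ of $\langle v, \mu^G\rangle$ and the empirical second moment of $\langle v, X-\mu^G\rangle$ is within $O(\gamma\log(1/\gamma))$ of $1$, simultaneously over all $v$. Using sub-Gaussian tail bounds together with a standard $\epsilon$-net / VC-covering argument on the unit sphere, the sample complexity $m = \Omega((d/\gamma^2)\,\mathrm{poly}\log(d/\gamma))$ suffices to ensure this condition holds with probability at least $9/10$; we then condition on this event for the remainder of the proof.

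Second, I would prove the \emph{certificate lemma}: for any subset $T$ of the corrupted set $S$ that retains at least a $(1 - 2\gamma)$-fraction of the good samples from $S^\star$, if the empirical covariance $\Sigma_T$ satisfies $\|\Sigma_T - I\|_{\mathrm{op}} \leq O(\gamma \log(1/\gamma))$, then $\|\mu_T - \mu^G\|_2 = O(\gamma\sqrt{\log(1/\gamma)})$. This is obtained by decomposing $\mu_T - \mu^G$ along the direction $v$ that maximizes $\langle v, \mu_T - \mu^G\rangle$, splitting $T$ into its intersection with $S^\star$ and its adversarial complement, and bounding each piece: the good-intersection contribution is controlled by the stability condition, and the adversarial contribution is controlled by Cauchy--Schwarz together with the assumed spectral bound.

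Third, I would describe the filtering algorithm that achieves this spectral bound in polynomial time. Starting with $T_0 = S$, at each step compute the top eigenvalue $\lambda$ and eigenvector $v$ of $\Sigma_{T_t}$; if $\lambda \leq 1 + C\gamma\log(1/\gamma)$, output $\mu_{T_t}$ and halt (at which point the certificate lemma applies). Otherwise, score each point by $\tau_i = \langle v, x_i - \mu_{T_t}\rangle^2$ and probabilistically discard points with probability proportional to $\tau_i$. The key invariant to maintain is that, conditional on goodness of $S^\star$, the expected mass of scores $\sum_{i \in T_t \cap S^\star} \tau_i$ contributed by clean points is at most half of $\sum_{i \in T_t} \tau_i$, because the large spectral gap must be produced by the adversarial points. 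This implies the filter removes more corrupted than clean points in expectation, so the $(1-2\gamma)$-good-fraction hypothesis of the certificate lemma is preserved, and since at most $\gamma m$ corrupted points exist, the algorithm terminates in $O(\gamma m)$ iterations, each requiring only a top-eigenvector computation.

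The main obstacle is the filter-progress invariant in step three: one must carefully argue that clean-point scores cannot contribute more than half the excess variance once $\lambda > 1 + C\gamma\log(1/\gamma)$. This is where the $\sqrt{\log(1/\gamma)}$ factor enters, because it corresponds exactly to the sub-Gaussian truncation radius at which clean outliers become negligible relative to adversarial ones; the stability condition of step one has to be stated quantitatively enough to support this comparison. Everything else—sample-complexity concentration, the certificate lemma's algebra, and the termination count—is a routine consequence once this invariant is in place.
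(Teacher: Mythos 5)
Your proposal is correct and follows essentially the same route as the source: the paper does not reprove Theorem~\ref{t2.1} but cites DKKLMS, whose argument is exactly the three-part structure you describe --- $(\gamma,\tau)$-goodness of the clean samples established by concentration (Definition~\ref{gooddef} and Lemma~\ref{good} in the appendix), the spectral certificate of robustness linking $\|\Sigma - I\|_2 \leq O(\gamma\log(1/\gamma))$ to $O(\gamma\sqrt{\log(1/\gamma)})$ mean error, and the iterative filter that projects onto the top eigenvector and removes outliers while preserving the invariant that it discards more corrupted than clean points. Your identification of the filter-progress invariant as the crux, and of the sub-Gaussian truncation radius as the source of the $\sqrt{\log(1/\gamma)}$ factor, matches the cited proof.
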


Note that we assume the uncontaminated data generating process follows a sub-Gaussian distribution, since DKKLMS provide an algorithm satisfying Theorem \ref{t2.1} for the sub-Gaussian case, called \texttt{Filter-Gaussian-Unknown-Mean} \cite{hightractable}. DKKLMS also propose a related algorithm for heavy-tailed distributions that requires bounded second moments, a more general condition on large real-world datasets that extensions on our work could incorporate. WHile our paper is based on the prior sub-Gaussian algorithm, which limits the real-world applicability of our research in that we can only apply it to data generated by gaussians, our paper stands as a proof of concept for better connections between robustness and privacy, and can easily be generalized as such to the heavy-tailed algorithm.

Each recursive call of the \texttt{Filter-Gaussian-Unknown-Mean} algorithm computes the spectral norm of the empirical covariance and compares the result against a threshold. If $||\Sigma||_2 \leq \text{Thresh}(\gamma)$, where $\text{Thresh}(\gamma) = C\gamma\log(1/\gamma) = O(\gamma\log(1/\gamma))$ for some constant $C$, then by the certificate of robustness, the error is appropriately bounded and the empirical mean can be returned. Otherwise, the algorithm will project the data set onto a particular direction and remove data points that are far from the current empirical mean. The algorithm will repeat this procedure on this new filtered set.

\section{Robustness Error Guarantees}\label{3}

In this section, we construct a novel and strict bound on the global sensitivity of the robust estimation procedure, enabling the application of differential privacy. We follow the procedure used to establish Theorem \ref{t2.1} in \cite{hightractable, practical}, and apply various asymptotic inequalities to generate said bound.

DKKLMS begin by drawing from $G$, an identity covariance Gaussian in $d$ dimensions with mean $\mu^G$. If $S$ is the original, corrupted dataset, there exist disjoint multisets $L, E$ where $L \subset S$ such that $S' = (S\backslash L) \cup E$ represents the final dataset produced by the iterative \texttt{Filter-Gaussian-Unknown-Mean} algorithm, such that $\mu^{S'} = \hat{\mu}$ is the mean estimate ultimately returned by the estimation procedure. The mean of each multiset $\mathcal{F}\in \{S, S', L, E\}$ will be represented as $\mu^{\mathcal{F}}$.

The goal is to derive a bound on $||\mu^{S'} - \mu^G||_2$, which represents the error between the empirical mean of the final filtered data samples and the true distribution mean. To begin, the following is true by definition.
\[|S'| (\mu^{S'}-\mu^G) = |S|(\mu^S - \mu^G) - |L|(\mu^L - \mu^G) + |E| (\mu^E -\mu^G)\]
\begin{equation}\label{eq:1} \mu^{S'} - \mu^G = \dfrac{|S|}{|S'|}(\mu^S - \mu^G) - \dfrac{|L|}{|S'|}(\mu^L - \mu^G) + \dfrac{|E|}{|S'|} (\mu^E -\mu^G)\end{equation}
It then suffices to bound each of these three terms using the following three lemmas. The full proofs for these lemmas is given in the Appendix. 
\begin{lemma}\label{lemma1}
We have that $(|S|/|S'|)||\mu^S - \mu^G||_2 \leq \frac{\gamma}{1-2\gamma}$.
\end{lemma}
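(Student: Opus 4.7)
My plan is to decompose the desired inequality into two pieces — a concentration bound on $\|\mu^S - \mu^G\|_2$ alone, and a relative-size bound $|S|/|S'| \le 1/(1-2\gamma)$ — and then multiply them.

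For the first piece, I would treat $\mu^S$ (following the DKKLMS analysis being adapted here) as the empirical mean of the underlying sample drawn from $G$. Since $G$ is sub-Gaussian with $\nu = \Theta(1)$ and identity covariance in $d$ dimensions, standard vector sub-Gaussian concentration (Hoeffding/Bernstein-style for sub-Gaussian vectors, or an $\epsilon$-net argument on the unit sphere) yields $\|\mu^S - \mu^G\|_2 = O(\sqrt{d/|S|})$ with probability at least, say, $19/20$. The hypothesis $|S| = \Omega((d/\gamma^2)\,\mathrm{poly}\log(d/\gamma))$ then converts this directly into $\|\mu^S - \mu^G\|_2 \le \gamma$, comfortably within the $9/10$ confidence budget of Theorem \ref{t2.1}.

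For the second piece, I would invoke the core invariant of \texttt{Filter-Gaussian-Unknown-Mean} proved in \cite{hightractable, practical}: at each iteration the filter removes at least as many corrupted points as clean points. Combined with the $\gamma$-corruption assumption that the adversary has changed at most a $\gamma$-fraction of the original sample, this means that across all iterations the total number of points missing from $S'$ relative to $S$ is at most $2\gamma|S|$, giving $|S'|\ge(1-2\gamma)|S|$ and hence $|S|/|S'|\le 1/(1-2\gamma)$. Multiplying the two bounds produces exactly $(|S|/|S'|)\|\mu^S-\mu^G\|_2 \le \gamma/(1-2\gamma)$.

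The main obstacle is the second piece: the fact that the filter removes no more clean points than corrupted ones is not a generic concentration statement but a structural property of the specific iterative spectral filter of DKKLMS. A rigorous proof must invoke their invariants on the thresholding and projection step — in particular, the fact that the truncated projected tail of the clean data is dominated by that of the corrupted data whenever the threshold condition $\|\Sigma\|_2 > C\gamma\log(1/\gamma)$ is triggered. By contrast, the concentration step is routine once the sample-size hypothesis is in hand, so the bulk of the work is really the careful accounting of clean vs.\ corrupted removals along the recursion.
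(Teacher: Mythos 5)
Your proposal is correct and matches the paper's proof in structure: the paper likewise multiplies the bound $\|\mu^S-\mu^G\|_2\le\gamma$ (obtained there by citing condition (3) of the $(\gamma,\tau)$-goodness of $S$, i.e.\ exactly the sub-Gaussian concentration you sketch) with $|S|/|S'|\le 1/(1-2\gamma)$ (obtained from the DKKLMS guarantee $\Delta(S,S')\le 2\gamma$, i.e.\ the clean-vs-corrupted removal accounting you describe). You simply unfold the two cited black boxes rather than invoking them by name.
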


\begin{lemma}\label{lemma2}
We have that $(|L|/|S'|)||\mu^L - \mu^G||_2 \leq \frac{\sqrt{2}\gamma+\sqrt{2\gamma}}{1-2\gamma}$.
\end{lemma}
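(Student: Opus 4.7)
The plan is to decompose $L$ into its clean and adversarial portions, bound the size of each using the invariants of the iterative \texttt{Filter-Gaussian-Unknown-Mean} algorithm, and then control $\|\mu^L - \mu^G\|_2$ by bounding each piece separately.

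First, I would write $L = L_G \sqcup L_A$, where $L_G = L \cap S_{\text{good}}$ are the clean samples removed by the filter and $L_A = L \cap S_{\text{bad}}$ are the adversarial samples removed. By the definition of $\gamma$-corruption, $|L_A| \leq |S_{\text{bad}}| \leq \gamma |S|$. The correctness invariant of \texttt{Filter-Gaussian-Unknown-Mean} --- namely, in every recursive step the filter removes at least as many adversarial as clean points --- inductively yields $|L_G| \leq |L_A|$, so $|L| \leq 2|L_A| \leq 2\gamma |S|$. Combined with $|S'| = |S| - |L| + |E| \geq |S| - |L| \geq (1-2\gamma)|S|$ (the same estimate used in Lemma~\ref{lemma1}), this gives $|L|/|S'| \leq 2\gamma/(1-2\gamma)$, accounting for the $1-2\gamma$ denominator in the statement.

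Next I would split $\mu^L$ as the convex combination $\mu^L = (|L_G|/|L|)\mu^{L_G} + (|L_A|/|L|)\mu^{L_A}$ and triangle-inequality through $\mu^G$. For the adversarial portion, the points of $L_A$ were rejected by the filter along the top eigenvector of the empirical covariance, so the filter threshold directly bounds $\|\mu^{L_A} - \mu^S\|_2$; combined with the $\|\mu^S - \mu^G\|_2 \leq \gamma$ estimate behind Lemma~\ref{lemma1}, and multiplied by $|L_A|/|S'|$, this contributes the $\sqrt{2}\gamma$ term. For the clean portion, since $L_G$ is a subset of a sub-Gaussian sample from $G$ with identity covariance and $|L_G|/|S| \leq \gamma$, I would invoke a worst-case-subset concentration inequality --- uniform over directions in $\mathbb{R}^d$ and over subsets of size at most $\gamma|S|$ --- to bound $\|\mu^{L_G} - \mu^G\|_2$ by a dimension-free quantity, which after multiplication by $|L_G|/|S'|$ and a Cauchy--Schwarz-style absorption of one factor of $\sqrt{|L|/|S'|}$ yields the $\sqrt{2\gamma}$ term.

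The main obstacle I anticipate is the clean-portion bound on $\|\mu^{L_G} - \mu^G\|_2$ with an explicit constant rather than an $O(\cdot)$. Because $L_G$ is selected adversarially by the filter as a function of the observed sample, an i.i.d.\ concentration inequality applied to $L_G$ directly is not valid; one instead needs a uniform-over-subsets bound, typically obtained by a covering argument over both the unit sphere and the combinatorial choice of subset. Extracting the clean numerical constants $\sqrt{2}$ and $\sqrt{2\gamma}$ --- rather than the asymptotically tighter but constant-free $O(\gamma\sqrt{\log(1/\gamma)})$ stability bound that is standard in the robust statistics literature --- is what will require the most bookkeeping, since the whole point of Section~\ref{3} is to convert the asymptotic guarantees of Theorem~\ref{t2.1} into strict inequalities suitable for setting noise scales in the Gaussian mechanism.
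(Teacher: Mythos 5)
Your route diverges from the paper's, and the divergence exposes a gap at exactly the step you flag as the obstacle. In the setup the paper actually uses (inherited from DKKLMS), $S$ is the $(\gamma,\tau)$-good sample set drawn from $G$, $L \subset S$ is the set of \emph{good} points missing from the final filtered set, and all adversarial points live in $E$ (handled in Lemma~\ref{lemma3}). So your decomposition $L = L_G \sqcup L_A$ and the filter-threshold argument for the ``adversarial portion'' do not correspond to anything this lemma needs --- $L$ is already entirely clean, just adversarially \emph{selected}. The real content of the lemma is precisely the piece you defer: bounding $\|\mu^{L}-\mu^G\|_2$ for a worst-case subset $L$ of a good sample with $|L| \leq 2\gamma|S|$. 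The paper does not re-derive this by a covering argument; it invokes Lemma~5.9 of DKKLMS, which uses the uniform tail-probability condition (condition~2 of Definition~\ref{gooddef}) to give $\|M_L\|_2 \ll \log(|S|/|L|) + \gamma|S|/|L|$, and then bounds $\|\mu^L-\mu^G\|_2 \leq \sqrt{\|M_L\|_2}$. Writing ``invoke a worst-case-subset concentration inequality'' without producing it leaves the entire substance of the proof unproved, and a from-scratch covering argument over subsets would not obviously land on the specific functional form $\log(|S|/|L|) + \gamma|S|/|L|$ that the rest of the computation depends on.

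Your account of where the two terms in the bound come from is also not how they arise. In the paper, both terms come from the single second-moment bound: multiplying by $|L|/|S'| \leq |L|/((1-2\gamma)|S|)$, the term $\frac{|L|}{|S|}\sqrt{\gamma|S|/|L|} = \sqrt{\gamma|L|/|S|} \leq \sqrt{2}\gamma$ gives the first summand, and $\frac{|L|}{|S|}\sqrt{\log(|S|/|L|)} = \sqrt{\log(x)/x^2} \leq \sqrt{1/x} \leq \sqrt{2\gamma}$ (with $x = |S|/|L| \geq 1$ and $\log(x)/x < 1$) gives the second. There is no clean/adversarial split and no Cauchy--Schwarz absorption. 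Your size bound $|L|/|S'| \leq 2\gamma/(1-2\gamma)$ matches the paper's Lemmas~\ref{c1} and~\ref{c2}, but note it is not even needed in that combined form --- the two factors $|L|/|S|$ and $|S|/|S'|$ are used separately at different points of the calculation.
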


\begin{lemma}\label{lemma3} Let $\kappa = \frac{\gamma}{1-2\gamma} + \frac{\sqrt{2}\gamma+\sqrt{2\gamma}}{1-2\gamma}$ and $\lambda^*$ represent the largest eigenvalue of the empirical covariance matrix. Then we have $(|E|/|S'|)||\mu^E - \mu^G||_2 \leq (2 + 2\sqrt{\gamma})\kappa + 2\sqrt{\gamma\lambda^*}$.
\end{lemma}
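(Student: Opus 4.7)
}

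The plan is to project everything onto the direction in which $\mu^E - \mu^G$ points and exploit the spectral certificate that governs termination of \texttt{Filter-Gaussian-Unknown-Mean}. Set $v = (\mu^E - \mu^G)/\|\mu^E - \mu^G\|_2$, so that $v \cdot (\mu^E - \mu^G) = \|\mu^E - \mu^G\|_2$; controlling the scalar $v \cdot (\mu^E - \mu^{S'})$ together with $\|\mu^{S'} - \mu^G\|_2$ will then suffice by the triangle inequality.

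First I would invoke the termination condition of the filter: $\|\Sigma_{S'}\|_2 \leq \lambda^*$, which gives $\sum_{x \in S'}(v \cdot (x - \mu^{S'}))^2 \leq |S'|\lambda^*$. Since $E \subseteq S'$, dropping the non-negative contribution from $S \setminus L$ and applying Cauchy--Schwarz (i.e., the elementary inequality $\sum_{x \in E}(v \cdot x - c)^2 \geq |E|(v \cdot \mu^E - c)^2$ with $c = v \cdot \mu^{S'}$) gives
\[|E|\,\bigl(v \cdot (\mu^E - \mu^{S'})\bigr)^2 \;\leq\; \sum_{x \in E}\bigl(v \cdot (x - \mu^{S'})\bigr)^2 \;\leq\; |S'|\lambda^*,\]
from which $(|E|/|S'|)\,|v \cdot (\mu^E - \mu^{S'})| \leq \sqrt{(|E|/|S'|)\,\lambda^*}$. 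This ``variance-to-mean'' step converts the spectral certificate into a direct bound on how far the injected points can pull the post-filter mean along $v$.

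Next I would close the loop against Equation~(\ref{eq:1}). The triangle inequality yields $\|\mu^E - \mu^G\|_2 \leq |v \cdot (\mu^E - \mu^{S'})| + \|\mu^{S'} - \mu^G\|_2$, while Lemmas~\ref{lemma1} and~\ref{lemma2} applied to the three-term decomposition of $\mu^{S'} - \mu^G$ give $\|\mu^{S'} - \mu^G\|_2 \leq \kappa + (|E|/|S'|)\,\|\mu^E - \mu^G\|_2$. Multiplying through by $|E|/|S'|$ and substituting both bounds produces the self-referential inequality
\[\tfrac{|E|}{|S'|}\,\|\mu^E - \mu^G\|_2 \;\leq\; \sqrt{\tfrac{|E|}{|S'|}\,\lambda^*} + \tfrac{|E|}{|S'|}\,\kappa + \bigl(\tfrac{|E|}{|S'|}\bigr)^{2}\,\|\mu^E - \mu^G\|_2.\]

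Finally I would solve this inequality and convert the ratio $|E|/|S'|$ into an explicit function of $\gamma$. Under $\gamma$-corruption $|E| \leq \gamma|S|$ and $|S'| \geq (1-\gamma)|S|$ (ignoring the negligible binomial fluctuations at the claimed sample size), so $|E|/|S'| \leq \gamma/(1-\gamma)$; isolating $\|\mu^E - \mu^G\|_2$, dividing by the positive factor $1 - |E|/|S'|$, and simplifying with $\sqrt{|E|/|S'|} \leq \sqrt{\gamma/(1-\gamma)}$ should collapse the right-hand side to $(2 + 2\sqrt{\gamma})\kappa + 2\sqrt{\gamma\lambda^*}$. The main obstacle will be this last bookkeeping step: the self-referential term $(|E|/|S'|)^{2}\,\|\mu^E - \mu^G\|_2$ must be absorbed cleanly, and matching the precise coefficient $2 + 2\sqrt{\gamma}$ on $\kappa$ (rather than a crude $O(\gamma)$) will require tracking the $\sqrt{|E|/|S'|}$ factor from the variance step through the denominator $1/(1 - |E|/|S'|)$ rather than bounding the two parts separately.
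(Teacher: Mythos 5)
Your proposal is sound and takes a genuinely different route from the paper. The paper works with the second-moment matrices $M_{\mathcal{F}} = \mathbb{E}_{X\in_u\mathcal{F}}[(X-\mu^G)(X-\mu^G)^T]$ centered at the \emph{true} mean, invokes the identity $\Sigma = M_{S'} - (\mu^{S'}-\mu^G)(\mu^{S'}-\mu^G)^T$ together with Corollary 5.10 of DKKLMS ($\|M_{S'}-I\|_2 \gtrsim (|E|/|S'|)\|M_E\|_2$) and $\|M_E\|_2 \geq \|\mu^E-\mu^G\|_2^2$, and ends up completing the square in a \emph{quadratic} self-referential inequality for $u = \|\mu^E-\mu^G\|_2$. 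You instead center at the \emph{empirical} mean $\mu^{S'}$ and apply the variance-to-mean inequality $\sum_{x\in E}(v\cdot(x-\mu^{S'}))^2 \geq |E|(v\cdot(\mu^E-\mu^{S'}))^2$ directly to the spectral certificate $\|\Sigma_{S'}\|_2\leq\lambda^*$, which yields $(|E|/|S'|)|v\cdot(\mu^E-\mu^{S'})|\leq\sqrt{(|E|/|S'|)\lambda^*}$ in one step. The payoff is that your self-referential inequality is \emph{linear} in $\|\mu^E-\mu^G\|_2$ (coefficient $(|E|/|S'|)^2<1$), so no quadratic formula is needed, and you avoid citing Corollary 5.10 altogether; both routes share the same closing step of feeding Lemmas \ref{lemma1} and \ref{lemma2} into Equation \ref{eq:1}.

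The one soft spot is exactly the bookkeeping you flag. The paper's accounting (Lemma \ref{c1}) only gives $|E|/|S|\leq 2\gamma$, and with that bound your chain produces a $\lambda^*$ term of $\frac{\sqrt{a\lambda^*}}{1-a}\leq 2\sqrt{2\gamma\lambda^*}=2\sqrt{2}\,\sqrt{\gamma\lambda^*}$ --- a factor $\sqrt{2}$ over the claimed $2\sqrt{\gamma\lambda^*}$, because your $1/(1-a)$ enters at the first power whereas the paper's enters under a square root. Recovering the exact constant requires the sharper bound $|E|\leq\gamma|S|$ that you assert (defensible, since $E$ consists only of surviving adversarial points, but not what Lemma \ref{c1} states), and even then only for $\gamma$ bounded away from roughly $0.29$. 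Given that the paper's own constants carry considerable slack, this is a constant-factor discrepancy rather than a structural flaw, but you should either prove $|E|/|S|\leq\gamma$ from the corruption model or accept a slightly larger leading constant on the $\sqrt{\gamma\lambda^*}$ term.
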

Using these lemmas, the desired expression can now be bounded as stated in the following theorem.

\begin{theorem}\label{thm}
     Let $\kappa = \frac{\gamma}{1-2\gamma} + \frac{\sqrt{2}\gamma+\sqrt{2\gamma}}{1-2\gamma}$ and $\lambda^*$ represent the largest eigenvalue of the empirical covariance matrix. Then we have $||\mu^{S'} - \mu^G||_2 \leq (3 + 2\sqrt{\gamma})\kappa + 2\sqrt{\gamma\lambda^*}$
\end{theorem}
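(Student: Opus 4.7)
The plan is to read Theorem \ref{thm} as a direct aggregation of the three lemmas via the triangle inequality applied to equation~(\ref{eq:1}). All the heavy lifting (controlling the mean of the uncontaminated sample, the removed tail $L$, and the adversarial insertions $E$) has been pushed into Lemmas \ref{lemma1}--\ref{lemma3}, so my job is simply to combine them cleanly and verify that the algebra collapses to the stated form.

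First, I would take norms of both sides of equation~(\ref{eq:1}) and apply the triangle inequality to obtain
\[\|\mu^{S'}-\mu^G\|_2 \leq \tfrac{|S|}{|S'|}\|\mu^S-\mu^G\|_2 + \tfrac{|L|}{|S'|}\|\mu^L-\mu^G\|_2 + \tfrac{|E|}{|S'|}\|\mu^E-\mu^G\|_2.\]
Next, I would plug in the bounds from Lemmas \ref{lemma1}, \ref{lemma2}, and \ref{lemma3} term-by-term, yielding
\[\|\mu^{S'}-\mu^G\|_2 \leq \tfrac{\gamma}{1-2\gamma} + \tfrac{\sqrt{2}\gamma+\sqrt{2\gamma}}{1-2\gamma} + (2+2\sqrt{\gamma})\kappa + 2\sqrt{\gamma\lambda^*}.\]
Finally, I would observe that the first two summands are exactly the definition of $\kappa$, so the right-hand side simplifies to $\kappa + (2+2\sqrt{\gamma})\kappa + 2\sqrt{\gamma\lambda^*} = (3+2\sqrt{\gamma})\kappa + 2\sqrt{\gamma\lambda^*}$, which is the advertised bound.

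There is no real obstacle here once the three lemmas are in hand: the triangle inequality does not lose anything that the lemma-level bookkeeping has not already paid for, and the only thing to double-check is the bookkeeping of the $\kappa$ definition so that the ``$3$'' in $(3+2\sqrt{\gamma})\kappa$ arises correctly from the ``$1$'' contributed by Lemmas \ref{lemma1}+\ref{lemma2} together with the ``$2$'' inside the bound of Lemma \ref{lemma3}. The genuine work of the section lives in the appendix proofs of the three lemmas; this theorem is the bookkeeping step that packages them into the single global-sensitivity-style inequality needed by the privacy analysis in the next section.
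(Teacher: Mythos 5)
Your proposal is correct and matches the paper's proof exactly: the paper likewise cites Equation~(\ref{eq:1}), the triangle inequality, and Lemmas~\ref{lemma1}--\ref{lemma3}, and your explicit check that the contributions of Lemmas~\ref{lemma1} and~\ref{lemma2} sum to one copy of $\kappa$, combining with the $(2+2\sqrt{\gamma})\kappa$ from Lemma~\ref{lemma3} to give the coefficient $3+2\sqrt{\gamma}$, is precisely the bookkeeping the paper leaves implicit.
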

\begin{proof}
    This follows by bounding Equation \ref{eq:1} using Lemmas \ref{lemma1}, \ref{lemma2}, and \ref{lemma3}, and an application of the triangle inequality.
\end{proof}
\begin{corollary}\label{3.4.1} Let $\hat{\mu}$ represent the final output of the recursive filtering process. Then we have that:
\[||\hat{\mu} - \mu^G||_2 \leq \ourboundmath\]
\end{corollary}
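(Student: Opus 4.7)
The plan is to invoke Theorem \ref{thm} directly and then discharge the only data-dependent quantity remaining in the bound, namely the largest eigenvalue $\lambda^*$ of the empirical covariance, by appealing to the termination condition of the filtering procedure.

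First I would observe that by the construction used to set up Equation \ref{eq:1}, the multiset $S'$ is precisely the dataset returned when \texttt{Filter-Gaussian-Unknown-Mean} halts, so $\hat{\mu}=\mu^{S'}$. Theorem \ref{thm} therefore already yields
\[
\|\hat{\mu}-\mu^G\|_2 \;\le\; (3+2\sqrt{\gamma})\kappa + 2\sqrt{\gamma\lambda^*},
\]
and the only thing left is to replace $\lambda^*$ with a quantity that depends solely on $\gamma$.

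Next I would invoke the termination rule of the recursive filter. As recalled in Section~2.2, each recursive call computes $\|\Sigma\|_2$ and returns the empirical mean only once $\|\Sigma\|_2 \le \mathrm{Thresh}(\gamma) = C\gamma\log(1/\gamma)$; otherwise the algorithm filters further and recurses. Hence at the iteration that actually produces $\hat{\mu}$, the empirical covariance of $S'$ satisfies $\lambda^*=\|\Sigma\|_2 \le C\gamma\log(1/\gamma)$. Substituting this into the bound above gives
\[
2\sqrt{\gamma\lambda^*} \;\le\; 2\sqrt{\gamma\cdot C\gamma\log(1/\gamma)} \;=\; 2\gamma\sqrt{C\log(1/\gamma)},
\]
and combining with the $(3+2\sqrt{\gamma})\kappa$ term delivers \ourbound, which is exactly the claimed inequality.

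There is essentially no obstacle here beyond bookkeeping: the substantive work lives in Lemmas \ref{lemma1}--\ref{lemma3} and in Theorem \ref{thm}. The only thing to be careful about is the \emph{stopping} guarantee --- namely that when the algorithm outputs $\hat{\mu}$, the spectral-norm threshold really has been met on the current filtered set, so that the bound on $\lambda^*$ is valid deterministically rather than only in expectation. Since this is a termination condition checked explicitly by \texttt{Filter-Gaussian-Unknown-Mean}, the implication is immediate, and the corollary follows.
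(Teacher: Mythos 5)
Your proposal matches the paper's own argument: both invoke Theorem \ref{thm} and then bound $\lambda^*$ by $C\gamma\log(1/\gamma)$ via the spectral-norm termination condition of \texttt{Filter-Gaussian-Unknown-Mean}, giving $2\sqrt{\gamma\lambda^*}\le 2\gamma\sqrt{C\log(1/\gamma)}$. The substitution is correct and your extra remark about the stopping condition holding deterministically on the returned set is a reasonable (and accurate) clarification of what the paper leaves implicit.
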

\begin{proof}
    Recall that $S'$ is analogous to the well-behaved multiset after filtering. The termination condition is the case of the small spectral norm, meaning that $\lambda^*$ is bounded by $C\gamma\log(1/\gamma)$.
\end{proof}

\begin{corollary} Let $\hat{\mu}^*$ represent the final output of the recursive filtering process on a dataset with one corrupted point. Then we have that:
\[||\hat{\mu} - \mu^G||_2 \leq \left(3+\frac{2}{\sqrt{n}}\right)\left(\dfrac{1+\sqrt{2}+\sqrt{2n}}{n-2}\right) + \dfrac{2\sqrt{C\log(n)}}{n}\]
\end{corollary}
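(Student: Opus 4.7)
The plan is to derive this inequality as a direct specialization of Corollary \ref{3.4.1}. A single corrupted point out of $n$ samples corresponds exactly to $\gamma$-corruption with $\gamma = 1/n$, so the hypotheses of the earlier corollary are already in force and no new structural argument is required; the statement reduces to substituting $\gamma = 1/n$ into the bound $(3 + 2\sqrt{\gamma})\kappa + 2\gamma\sqrt{C\log(1/\gamma)}$ from Corollary \ref{3.4.1} and simplifying algebraically.

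First I would rewrite $\kappa$ under the substitution. Using $1 - 2\gamma = (n-2)/n$, the two terms defining $\kappa$ collapse to $1/(n-2)$ and $(\sqrt{2} + \sqrt{2n})/(n-2)$ respectively, giving $\kappa = (1 + \sqrt{2} + \sqrt{2n})/(n-2)$. Next I would simplify the outer factor as $3 + 2\sqrt{\gamma} = 3 + 2/\sqrt{n}$, and the additive term as $2\gamma\sqrt{C\log(1/\gamma)} = (2/n)\sqrt{C\log n}$. Assembling these three pieces reproduces the right-hand side of the claimed inequality verbatim.

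I do not anticipate any conceptual obstacle here, since the content is purely the bookkeeping of a substitution into a previously proved bound. The only points worth flagging are mild regularity conditions: we need $n > 2$ so that the denominator $n-2$ is positive, and $n$ must still satisfy the sample-complexity requirement of Theorem \ref{t2.1} with $\gamma = 1/n$ so that the underlying estimation guarantee applies. Under these conditions the substitution above completes the proof in a few lines.
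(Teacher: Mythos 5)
Your proposal is correct and is exactly the paper's argument: the paper's proof is the one-line remark that the corollary follows from setting $\gamma = 1/n$ in Corollary \ref{3.4.1}, and your explicit algebra (computing $\kappa = (1+\sqrt{2}+\sqrt{2n})/(n-2)$, $3+2\sqrt{\gamma}=3+2/\sqrt{n}$, and $2\gamma\sqrt{C\log(1/\gamma)} = (2/n)\sqrt{C\log n}$) checks out and reproduces the stated bound. Your added caveats about $n>2$ and the sample-complexity requirement are sensible but not part of the paper's proof.
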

\begin{proof}
    This follows from $\gamma = \frac{1}{n}$ in Corollary \ref{3.4.1}.
\end{proof}

\section{Differentially Private Robust Mean Estimation}

Our contribution is the following differentially private algorithm for robust mean estimation in high dimensions. It uses the work by DKKLMS \cite{practical} to calculate the true robust result. The error guarantees have been adapted in the proofs above to provide an upper bound on global sensitivity, such that noise can then be appropriately added using the mean-zero Gaussian noise mechanism.
\begin{algorithm}
\caption{\texttt{DP-Robust-Mean-Estimation}}\label{alg}
\begin{algorithmic}
    \REQUIRE data set $S$, corruption level $\gamma$, confidence level $\tau$, privacy loss $\epsilon$, threshold factor $C$
    \STATE $\hat{\mu}\leftarrow \texttt{Filter-Gaussian-Unknown-Mean}(S, \gamma, \tau)$
    \STATE $\hat{\mu}_{DP} \leftarrow \hat{\mu}$ +\STATE $\mathcal{N}\left[0, \dfrac{8\ln(\frac{1.25}{\tau})}{\epsilon^2}\left( \ourboundmath\right)^2 \right]$
    \RETURN $\hat{\mu}_{DP}$
\end{algorithmic}
\end{algorithm}

\begin{algorithm}
\caption{\texttt{DP-Mean-Estimation}}\label{alg2}
\begin{algorithmic}
    \REQUIRE data set $S$, confidence level $\tau'$, privacy loss $\epsilon'$, threshold factor $C$
    %\STATE $s \sim \text{Unif}(\min(S), \max(S))$
    %\STATE $S' \leftarrow S \cup s$
    \STATE $\hat{\mu}\leftarrow \texttt{Filter-Gaussian-Unknown-Mean}(S, \frac{1}{n}, \tau')$
    \STATE $\hat{\mu}_{DP} \leftarrow \hat{\mu}$ + \STATE $\mathcal{N}\left[0, \dfrac{8\ln(\frac{1.25}{\tau'})}{\epsilon'^2}\left(\frac{\left(3+2/\sqrt{n}\right)\left(1+\sqrt{2}+\sqrt{2n}\right)}{n-2} + \frac{2\sqrt{C\log(n)}}{n}\right)^2 \right]$
    \RETURN $\hat{\mu}_{DP}$
\end{algorithmic}
\end{algorithm}

\begin{theorem}
    The \emph{\texttt{DP-Robust-Mean-Estimation}} algorithm returns a robust mean in polynomial time with $(\epsilon, \tau)$-differential privacy. The  \emph{\texttt{DP-Mean-Estimation}} returns a $(\epsilon',\tau')-$ differentially private mean.
\end{theorem}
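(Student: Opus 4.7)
The plan is to establish the two claims in tandem, separating the runtime and the privacy guarantees. Polynomial runtime follows at once from Theorem \ref{t2.1}, which asserts that \texttt{Filter-Gaussian-Unknown-Mean} is a polynomial-time efficient estimator, together with the fact that sampling a $d$-dimensional Gaussian and adding it to $\hat{\mu}$ are also polynomial operations; the same argument applies unchanged to \texttt{DP-Mean-Estimation} with $\gamma = 1/n$.

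For the privacy claim, my central task is to convert the a priori error bound from Corollary \ref{3.4.1} into a bound on the global sensitivity $\Delta f$, so that Fact 1 can be applied directly. I would fix a sub-Gaussian generating distribution $G$ with true mean $\mu^G$ and consider any pair of adjacent datasets $x, x'$ satisfying the hypotheses of Theorem \ref{t2.1}. Corollary \ref{3.4.1} applied to each output gives
\[\|\hat{\mu}(x) - \mu^G\|_2 \leq B \quad \text{and} \quad \|\hat{\mu}(x') - \mu^G\|_2 \leq B,\]
where $B = \ourboundmath$. By the triangle inequality $\|\hat{\mu}(x) - \hat{\mu}(x')\|_2 \leq 2B$, so $\Delta f \leq 2B$. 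Substituting $\Delta f = 2B$ into the Gaussian-mechanism scale $b = 2\ln(1.25/\tau)(\Delta f)^2/\epsilon^2$ yields exactly the variance $8\ln(1.25/\tau)B^2/\epsilon^2$ appearing in Algorithm \ref{alg}, and invoking Fact 1 closes out $(\epsilon,\tau)$-differential privacy. The \texttt{DP-Mean-Estimation} case is identical after specializing to $\gamma = 1/n$, since two datasets differing in one record are precisely $1/n$-corrupted relative to one another; the algebraic form of $B$ then collapses to the expression inside the noise scale of Algorithm \ref{alg2}.

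The main obstacle is that Corollary \ref{3.4.1} inherits the $9/10$ success probability of Theorem \ref{t2.1}, so the sensitivity bound $\Delta f \leq 2B$ is only probabilistic. I would handle this by routing through a boosted version of the filtering guarantee parameterized by the confidence level $\tau$, absorbing the failure event into the additive $\tau$ slack of $(\epsilon,\tau)$-DP; this is the step that most requires care, since naively combining a high-probability sensitivity bound with the Gaussian mechanism does not yield pure differential privacy. A secondary subtlety worth checking is that replacing a single record in $x$ to form $x'$ cannot push the effective corruption fraction past the regime in which the certificate of robustness $\|\Sigma\|_2 \leq C\gamma \log(1/\gamma)$ fires on both runs; provided the sample complexity of Theorem \ref{t2.1} is satisfied with slack, this is a mild perturbation and the bound $B$ applies on both sides simultaneously.
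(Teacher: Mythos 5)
Your proposal follows essentially the same route as the paper: bound $\|\hat{\mu}(x)-\mu^G\|_2$ and $\|\hat{\mu}(x')-\mu^G\|_2$ by $B$ via the certificate of robustness, apply the triangle inequality to get $\Delta f \le 2B$, and invoke the Gaussian mechanism, with the failure probability of the filtering guarantee absorbed into the additive $\tau$ term. If anything, you are more explicit than the paper about the delicate step --- that a sensitivity bound holding only with high probability does not naively compose with the Gaussian mechanism --- which the paper dispatches in a single sentence.
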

\begin{proof}
    The correctness and polynomial runtime of robust mean estimation is given by \cite{practical}. The only additional step is adding Gaussian noise, which runs linear time with respect to the number of dimensions.
    
    The certificate of robustness bounds the sensitivity of the robust mean estimator $f$, which is computed using the \texttt{Filter-Gaussian-Unknown-Mean} algorithm of \cite{practical}. From Theorem \ref{thm}, then $||f(S) - \mu||_2 \leq \ourboundmath$. Therefore, for any adjacent data sets $S, S'$, by the triangle inequality $||f(S) - f(S')||_2 \leq ||f(S) - \mu||_2 + ||f(S') - \mu||_2 \leq 2\left(\ourboundmath\right)$. The sensitivity of the robust mean estimation task is $\Delta f = 2\left(\ourboundmath\right)$. Certain elements of the proof hold with probability $1-\tau$, which provides the additive privacy loss term. Since Gaussian noise is added to the resulting estimate with scale of $2\ln(1.25/\delta)(\Delta f)^2 / \epsilon^2$, it follows by the correctness of the Gaussian mechanism that \texttt{DP-Robust-Mean-Estimation} is $(\epsilon, \tau)$-differentially private.
\end{proof}

\section{Empirical Results}

% \begin{figure}[h]
% \begin{center}
%     %\includegraphics[width=0.3\textwidth]{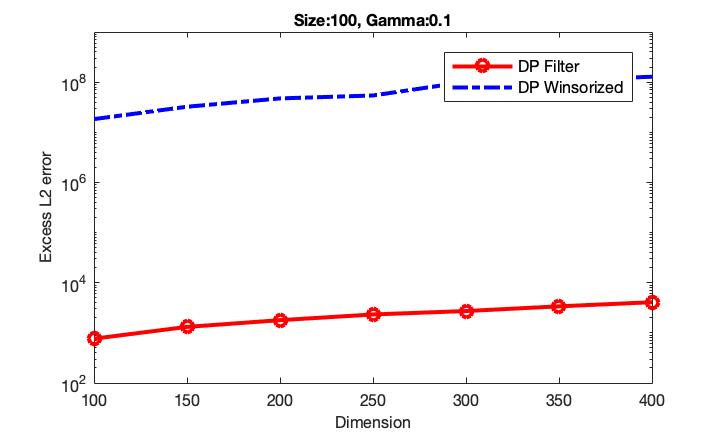}
%     \includegraphics[width=0.32\textwidth]{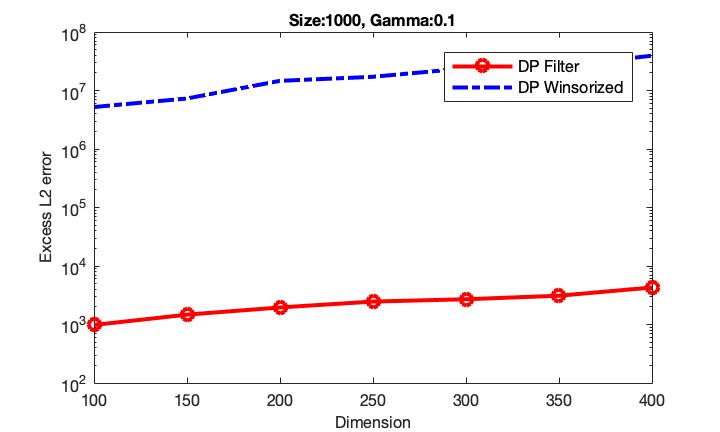}
%     \includegraphics[width=0.32\textwidth]{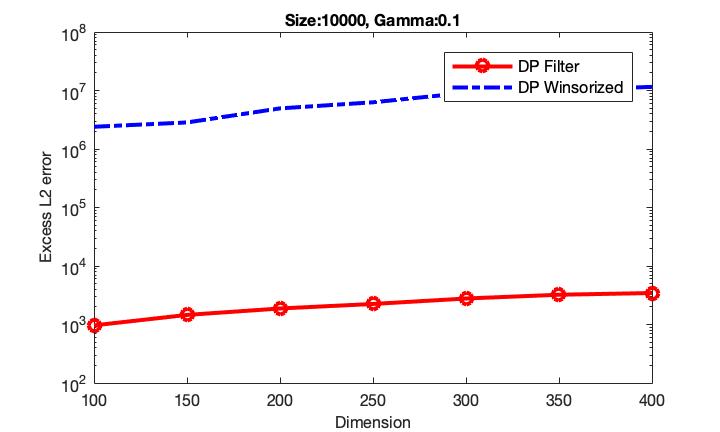}
%     \includegraphics[width=0.32\textwidth]{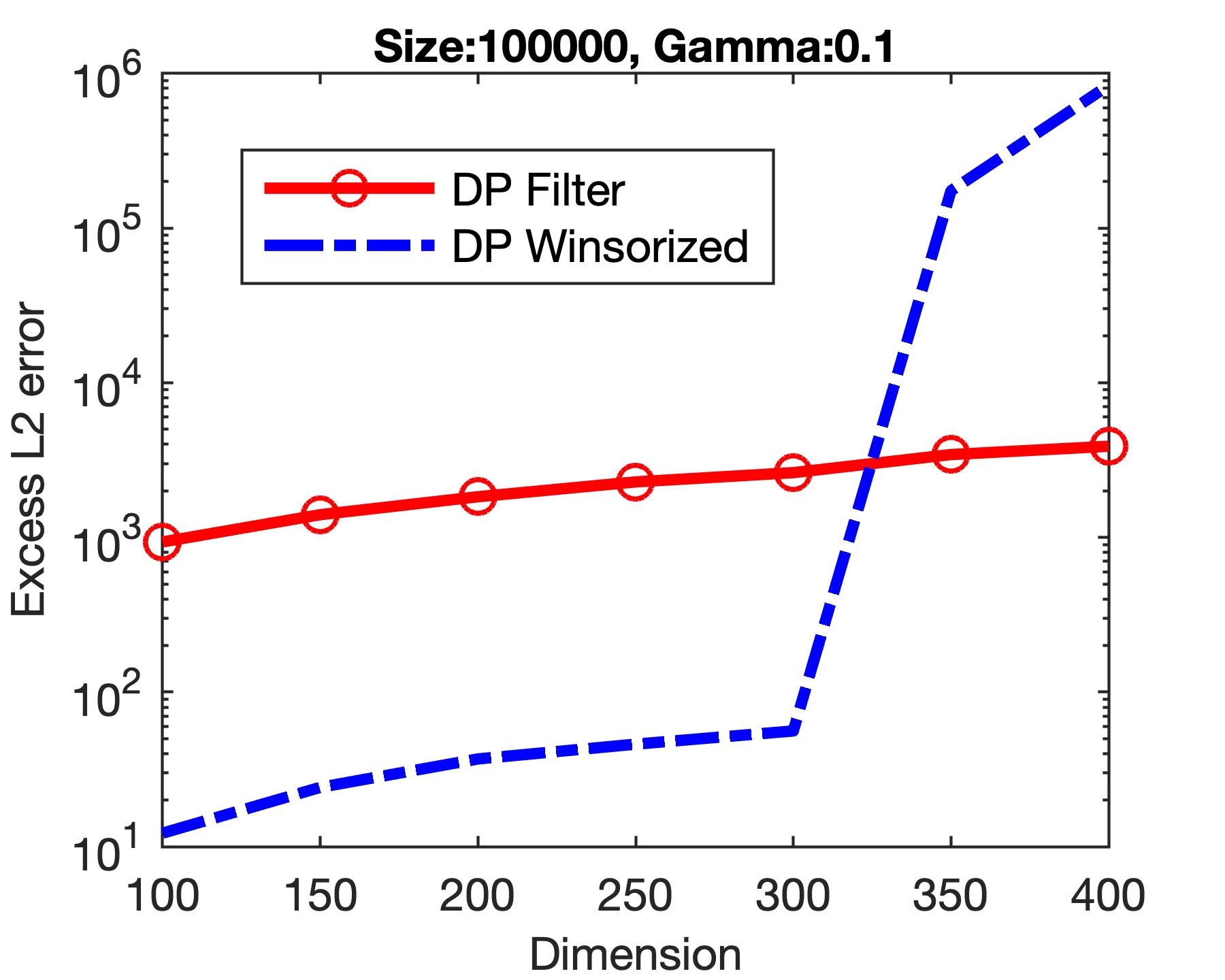}
% \end{center}
% \caption{Excess $\ell_2$ error between differentially private mean estimation using filtering and Winsorization with varying (a) $n=1000$, (b) $n=10000$, and (c) $n=100000$.}% $n = \{1000, 10000, 100000\}$}
% \end{figure}

\begin{figure}[ht]
% \vskip 0.2in
\begin{center}
\centerline{\includegraphics[width=\columnwidth]{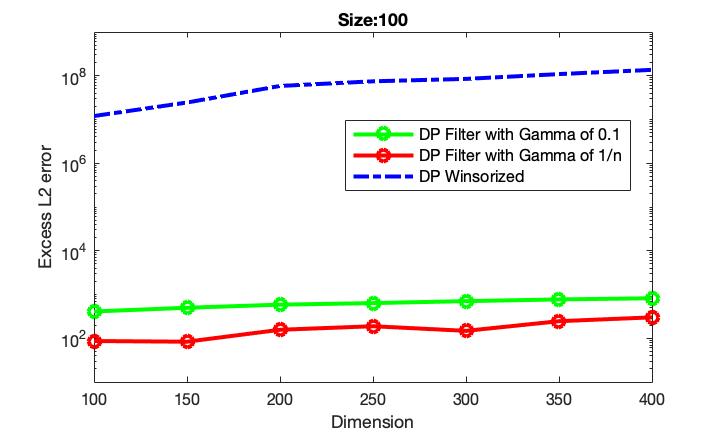}}
\caption{Excess $\ell_2$ error between the differentially private mean using filtering and the differentially private Winsorized mean with $n = 100$.}\label{fig1}
\end{center}
\vskip -0.3in
\end{figure}

\begin{figure}[ht]
% \vskip 0.2in
\begin{center}
\centerline{\includegraphics[width=\columnwidth]{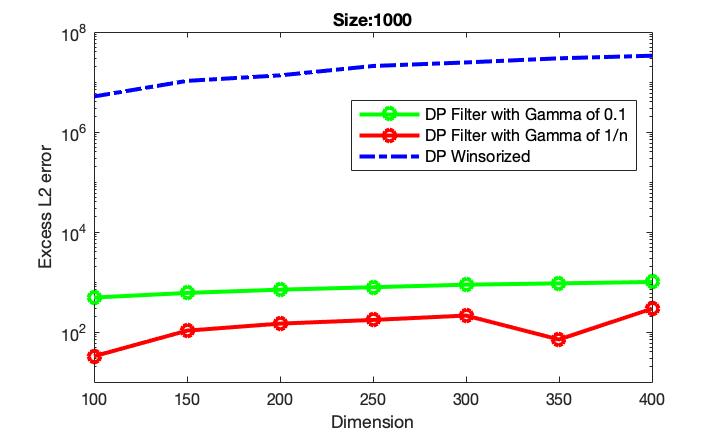}}
\caption{Replication of Figure \ref{fig1} for $n = 1000$.}\label{1b}
\end{center}
\vskip -0.3in
\end{figure}

\begin{figure}[ht]
% \vskip 0.2in
\begin{center}
\centerline{\includegraphics[width=\columnwidth]{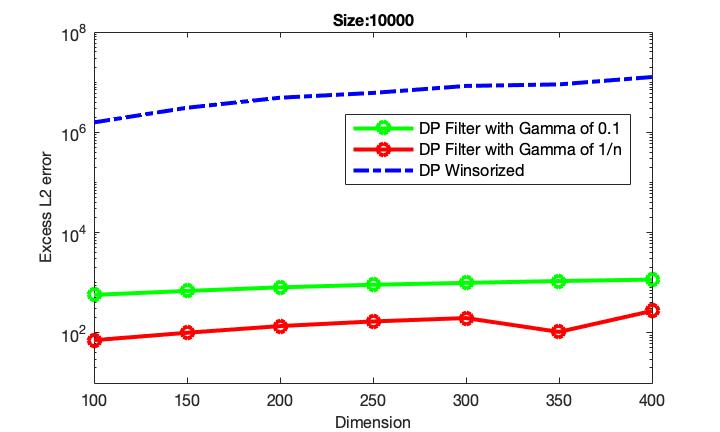}}
\caption{Replication of Figure \ref{fig1} for $n = 10000$.}\label{1c}
\end{center}
\vskip -0.5in
\end{figure}

We performed an empirical evaluation of the above algorithm on synthetic data, using the implementation of \cite{practical} for the robust mean estimation algorithm. The focus of this evaluation was on statistical accuracy, and we have a series of results for varying parameters $n$, $\gamma$, where $n$ refers to the number of overall data points and $\gamma$ is, as identified earlier, the level of ‘corruption’ that the robust mean estimation algorithm eliminates. We include $\gamma$ to parametrize the bound on sensitivity, and use $\gamma = 0.1$ for Algorithm \ref{alg} and $\gamma = \frac{1}{n}$ for Algorithm \ref{alg2}.

In our evaluation, we compare the error loss generated by our differentially private algorithm to the error loss generated by the differentially private Winsorized mean. Winsorization is a statistical technique that clamps data points to a specified interquantile interval by rounding the bottom $\alpha n$ data points up to the $\alpha$-percentile and rounding the top $\alpha n$ data points down to the $(1-\alpha)$-percentile. This restricts the global sensitivity of the mean, leading to the design of differentially private algorithms as in \cite{estimation}, the reference algorithm used in our evaluations. In all of these experiments, we set the privacy loss parameter $\epsilon=1$. Code of our implementation is available at \url{https://github.com/TurboFreeze/dp-robust-filter}.

We make several observations of note. First, both algorithms outperform the differentially private Winsorized algorithm in medium to large datasets\footnote{Our algorithms also outperform the differentially private Winsorized algorithm in high dimensions for larger $n$, including $n = 100,000$. For brevity, we do not include these results, but they are easily reproducible with our implementation code}; the scale of error shown in Figures \ref{fig1}, \ref{1b}, and \ref{1c} is several orders of magnitude lower in our results than the differentially private Winsorized mean. In modern data analysis, data sets are often of the range that we study here, in the thousands or tens of thousands of observations. This is especially true in the social sciences, where data privacy is generally most relevant. As such, improving the performance of estimators on this range of data is an important step towards advancing privacy in these areas.

Second, Algorithm \ref{alg} has $\ell_2$-error constant as $n$ decreases. This is obviously by construction, the bounds in Corollary \ref{3.4.1} are all $n$-independent even if dimension-dependent error is still generated. We also identify that Algorithm \ref{alg2} has $\ell_2$-error slightly decreasing as $n$ increases and outperforms Algorithm \ref{alg} and the Winsorized mean as it has less corruption that generates adversarial noise. 

While intuitive, these are useful result to note: the performance of differentially private Winsorized means sharply deteriorates with lower $n$, and makes our algorithm a far preferable solution, especially when dealing with medium to large as opposed to very large datasets. We further identify that, while differentially private Winsorized means can initially adjust for hihg-dimensionality because the computation of the median includes a $\frac{1}{n}$ term, as $\frac{d}{n}$ increases, the effect of dimensionality outweighs the positive effects of larger datasets; and the positive effects of larger datasets do not even exist on the medium to high scales we consider. This explains the steepness in the curves marked by our graphs; we show above that the differentially private algorithms increase in error much faster than both our algorithms, especially our robust differentially private algorithm.

Finally, as a quick comparison between the two algorithms, we utilize a rather high $\gamma$ value of 0.1 in evaluating Algorithm \ref{alg}. This is of particular significance: paring down the $\gamma$ value intuitively makes the bound stronger, both by definition of the strict bound that we have shown, but also by intuition: a robust estimator that is robust to less corruption of the data approaches a differentially private estimator that is only required to be 'robust' to one adjusted data point; said estimator constitutes Algorithm \ref{alg2}. In and of itself, however, the fact that we can still achieve starkly better rates than Winsorized means on our loose sensitivity bound with high $\gamma$ suggests the power of the algorithm.

\section{Discussion}

Conventional differential privacy techniques generally have privacy loss parameters that scale with the number of dimensions. Naive truncation, as in the cases of $\alpha$-trimmed or Winsorized means, will thus have dimension-dependent privacy loss. As a result, existing algorithms require databases to be bounded or assume that the parameter in question lies in a bounded and known interval. These are problematic assumptions to require, because it means we cannot construct differentially private estimators over commonly used distributions, including normal and t-distributions. Moreover, this is often infeasible, especially in high dimensions. Having the user define bounds in which the population mean lays for $\alpha-$trimmed and Winsorized means to work requires the user defining the bounds for the large number of dimensions in this setting, which is impractical. Our result makes a significant contribution in this regard: by providing strict dimension-independent error guarantees without relying on any metadata, we can achieve accurate results on large-dimensional datasets without requiring boundedness constraints. This also allows the algorithm to function unsupervised without user-provided bounds in each dimension. The significance of the work done by DKKLMS in  \cite{hightractable, practical} and others like Hopkins and Li \cite{hard, mixture}, which achieve dimension-independent errors, thus yields an effective filtering procedure that underlies this paper. In and of itself, generating dimension-independent privacy loss thus represents a significant step forward in applying differential privacy to high-dimensional settings.

The differentially private algorithm proposed in this paper uses the same effective filtering procedure proposed by DKKLMS, but still requires adding noise in each dimension in order to achieve differential privacy. We make two remarks with regard to this. First, we opt into the use of global sensitivity to calibrate noise as opposed to the local sensitivity across each dimension. There is recent work in this direction \cite{highdim}. We choose to use global sensitivity because our algorithm gives a computationally tractable bound on said sensitivity; local sensitivity remains a direction for future work. Second, privacy can be lost through any dimension; this proposal should be fairly intuitive. Therefore, differential privacy would inevitably require noise in each dimension. As a result, even if we can entirely eliminate dimensionality as a concern in privacy loss, differentially privacy algorithms would inevitably introduce some dimension-dependent error. This remains a significant issue inherent to any high-dimensional privacy problems, not just robust mean estimation as studied in this paper, and could pose a challenge for imposing differential privacy in modern machine learning or institutional data release for high-dimensional datasets. Our empirical results suggest that our algorithm is a good solution to the conventional 'curse of dimensionality', where increasing $\frac{d}{n}$ ratios result in algorithms lacking the information to compute accurate estimates in higher dimensions. We note that by bounding privacy loss, even though we cannot achieve fully dimension-independent error, we can sharply minimize error that would otherwise grow exponentially in $d$.

Finally, we note the importance of spectral methods as used in prior work. These methods remain a powerful mechanism that are gaining traction for designing and validating the robustness of statistics. While the work in this paper is only for one specific technique, it can be adapted to a number of different settings as discussed in related work. The work by Diakonikolas et al., for example, gives the case of bounded second moments in addition to the general sub-Gaussian case studied here; the work by Hopkins, in analyzing both Gaussian and sub-Gaussian deviations, applies here as well.

\section{Conclusion}

In this paper, we investigate the connection between two notions of statistical stability: query sensitivity and robustness, which are critical in designing privacy-preserving and secure algorithms, respectively. In particular, our primary contribution is leveraging stability that inherently arises from recent developments in high-dimensional robustness for use in differential privacy. Furthermore, we show that our approach increases utility in privacy-preserving data analysis on medium- to large-sized data sets, hopefully encouraging the use of differential privacy through this improved utility. 

The work here thus presents a novel algorithm that reduces privacy loss and improves tractability in high dimensional settings, marking a large step forward for analyzing differential privacy in such cases. By concretizing asymptotic bounds yielded from \cite{practical}, we develop a dimension-independent guarantee for privacy loss, which can be tractably computed in high dimensions. Our resulting empirical work additionally identifies a problem which appears to hold true for differential privacy in the high dimensional setting: because privacy leakage can occur from any dimension, the generation of a differentially private statistic results in dimension-dependent error even when privacy loss is dimension-independent. We note then that if dimensionality is a permanent barrier to differential privacy in large datasets, our algorithm provides the best possible error guarantee by eliminating dimension-dependent privacy loss and having dimensionality impact only the generation of Gaussian noise. This paper thus provides important results to understanding differential privacy in high dimensions.

\section{Acknowledgements}
We would like to thank Cynthia Dwork for her helpful instruction, insightful advice, and careful guidance; and Gautam Kamath for taking the time to read our paper and provide his thoughtful criticisms.

% \newpage

\bibliographystyle{plainnat}
\bibliography{bib.bib}

\section{Appendix}
\label{appa}

We begin by defining $(\gamma, \tau)$-good multisets, which is used to establish Theorem 2.1 in \cite{hightractable, practical}.

\begin{definition}[$(\gamma, \tau)$-good multisets \cite{hightractable}]\label{gooddef}
    Let $G$ be an identity covariance Gaussian in $d$ dimensions with mean $\mu^G$, and $\gamma, \tau > 0$. A multiset $S$ is $(\gamma, \tau)$-good with respect to $G$ if:
    \begin{enumerate}
        \item For all $x\in S$, $||x- \mu^G||_2\leq O(\sqrt{d\log(|S|/\tau)})$
        \item For every affine function $L:\mathbb{R}^d\to\mathbb{R}$ such that $L(x)=v\cdot(x-\mu^G)-T$, $||v||_2=1$, we have that $|P_{X\in_u S}[L(X)\geq 0] -P_{X\in G}[L(X)\geq 0]| \leq\frac{\gamma}{T^2\log(d\log(d/\gamma\tau))}$
        \item $||\mu^S-\mu^G||_2\leq \gamma$
        \item $||M_S-I||_2\leq \gamma$
    \end{enumerate}
\end{definition}

\begin{lemma}\label{good}
Let $G$ be an identity covariance Gaussian with $\gamma, \tau > 0$. If the multiset $S$ is obtained by taking $\Omega((d/\gamma^2)\text{ poly }\log(d/\gamma\tau))$ independent samples from $G$, then $S$ is $(\gamma, \tau)$-good with respect to $G$ with probability at least $1-\tau$.
\end{lemma}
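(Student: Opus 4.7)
The plan is to verify the four conditions of Definition \ref{gooddef} separately, each holding with failure probability at most $\tau/4$, and then union bound. Throughout, let $n = |S| = \Omega((d/\gamma^2)\,\text{poly}\log(d/\gamma\tau))$, with the polylog factors chosen generously enough to absorb all of the slack accumulated below.

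Conditions 1, 3, and 4 are routine Gaussian concentration statements. For condition 1, each $\|x-\mu^G\|_2^2$ is $\chi^2_d$, so standard tail bounds give $\|x-\mu^G\|_2 = O(\sqrt{d\log(n/\tau)})$ with probability at least $1-\tau/(4n)$ per sample; a union bound over $S$ finishes the job. For condition 3, $\mu^S - \mu^G \sim \mathcal{N}(0, I/n)$, and another $\chi^2_d$ tail bound yields $\|\mu^S-\mu^G\|_2 \leq \gamma$ once $n \gtrsim d/\gamma^2$ up to a log factor in $1/\tau$. For condition 4, spectral concentration of the empirical covariance around $I$ can be obtained either through matrix Bernstein applied to the rank-one summands (after truncating using condition 1 as a high-probability boundedness certificate) or through an $\epsilon$-net argument on the unit sphere $S^{d-1}$ combined with sub-exponential concentration of $\langle v, x-\mu^G\rangle^2$; either route gives $\|M_S - I\|_2 \leq \gamma$ at the same sample complexity.

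The main obstacle is condition 2, which is a uniform convergence statement over the continuous family of affine halfspaces $\{x: v\cdot(x-\mu^G)\geq T\}$ with a $T$-dependent tolerance $\gamma/(T^2 \log(d\log(d/\gamma\tau)))$ that tightens as $|T|$ grows. I would handle this by dyadically bucketing $T$ into intervals $[2^k, 2^{k+1}]$, with $k$ ranging up to $O(\log(d\log(d/\gamma\tau)))$ since beyond this scale both empirical and population probabilities are already smaller than the target tolerance (using condition 1 to argue the empirical side is in fact zero for $T$ large enough). Within bucket $k$, both the population probability and, with high probability, the empirical one are bounded by the Gaussian tail $\exp(-2^{2k-1})$, so a multiplicative/relative VC inequality for halfspaces (VC dimension $d+1$) gives uniform deviations on the order of $\sqrt{(d/n) \cdot \exp(-2^{2k-1}) \log(1/\tau)} + (d/n)\log(1/\tau)$. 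Comparing this to the required $\gamma/(2^{2k}\log(d\log(d/\gamma\tau)))$, one checks that $n = \Omega((d/\gamma^2)\,\text{poly}\log(d/\gamma\tau))$ suffices bucket-by-bucket, and a final union bound over the $O(\log(d\log(d/\gamma\tau)))$ buckets costs only another polylog factor.

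Combining the four events by a union bound completes the proof with total failure probability at most $\tau$. Condition 2 is really where the stated sample complexity comes from; conditions 1, 3, and 4 all fit comfortably within the budget with room to spare, and any constants lost to the union bound are swept into the implicit polylog factor in the $\Omega$ bound.
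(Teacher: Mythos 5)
The paper does not actually prove this lemma: its ``proof'' is a single sentence deferring entirely to Appendix~B of \cite{hightractable}. Your proposal is therefore not an alternative to the paper's argument so much as a reconstruction of the external one it cites, and as a reconstruction it is on target. The division of labor is right: conditions~1, 3, and 4 of Definition~\ref{gooddef} are indeed routine $\chi^2_d$ and covariance-concentration facts that hold already at $n = \tilde{\Omega}(d/\gamma^2)$, and condition~2 --- uniform convergence over affine halfspaces with a tolerance that tightens like $1/T^2$ --- is the part that actually drives the sample complexity and is handled in \cite{hightractable} by exactly the kind of VC-dimension argument over the class of halfspaces that you invoke.

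The one place your sketch is thinner than it should be is the large-$T$ regime of condition~2. The additive term of the relative VC inequality is on the order of $(d/n)\log(n/\tau)$, and the required tolerance in bucket $k$ is $\gamma/(2^{2k}\log(\cdot))$; if you let $T^2$ range all the way up to the $O(d\log(n/\tau))$ scale permitted by condition~1 and compare these two quantities naively, you are asking for $n = \Omega(d^2\,\mathrm{polylog}/\gamma)$, which exceeds the stated budget. The fix is the one you gesture at but should make explicit: for $T$ beyond $O(\sqrt{\log(n/\gamma\tau)})$ you must not rely on the VC deviation bound at all, but instead argue directly that (i) the population tail $e^{-T^2/2}$ is already far below the tolerance, and (ii) the empirical mass of any halfspace at threshold $T$ is at most $O(\mathrm{polylog})/n \le \gamma/(T^2\log(\cdot))$, using condition~1 together with the fact that a unit direction $v$ can only capture samples nearly collinear with it at that distance. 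With that case split stated, the bucket-by-bucket check goes through and the union bound over the $O(\log\log)$ buckets and the four conditions costs only the polylog factor you already budgeted for.
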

This lemma follows from a proof given in Appendix B of \cite{hightractable}. 

Defining disjoint multisets $L, E$ as in Section \ref{3}, we have $L \subset S$ and $S' = (S\backslash L) \cup E$ representing the final data set produced by the iterative  \texttt{Filter-Gaussian-Unknown-Mean}. We denote the mean of each multiset $\mathcal{F}\in \{S, S', L, E\}$ will be represented as $\mu^{\mathcal{F}}$ and the mean of the distribution $G$ as $\mu^G$. Similarly, $M_{\mathcal{S}}$ will denote matrices of the form $\mathbb{E}_{X\in_u\mathcal{S}}[(X-\mu^G)(X-\mu^G)^T]$ for $\mathcal{S}\in \{S, S', L, E\}$. DKKLMS provide two observations, which we rephrase as lemmas to set up our own proofs.

\begin{definition}\label{3.1}
    Given finite multisets $S$ and $S'$ we let $\Delta(S, S')$ denote the size of the symmetric difference between $S$ and $S'$ divided by the cardinality of $S$.
\end{definition}

\begin{lemma}\label{c1}
    $(|L| + |E|)/|S| = \Delta(S, S') \leq 2\gamma$
\end{lemma}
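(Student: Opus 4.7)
The plan is to split the lemma into the equality $\Delta(S, S') = (|L|+|E|)/|S|$ and the inequality $\Delta(S, S') \leq 2\gamma$. The equality is essentially a bookkeeping exercise: since $L \subset S$ with $L \cap E = \emptyset$ and $E$ consists (by construction) of adversarially inserted points disjoint from $S$, the decomposition $S' = (S \setminus L) \cup E$ directly yields $S \setminus S' = L$ and $S' \setminus S = E$, so the symmetric difference has cardinality $|L| + |E|$ and Definition \ref{3.1} closes the equality.

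For the inequality my approach is an invariance argument tracking $|L| + |E|$ through the execution of the filter. Let $C$ denote the input to \texttt{Filter-Gaussian-Unknown-Mean}, i.e., the $\gamma$-corrupted version of the clean sample set $S$. By the definition of $\gamma$-corruption, $C$ is obtained from $S$ by removing a subset $L_0 \subset S$ of size $m' \sim \mathrm{Bin}(\gamma, |S|)$ and substituting an arbitrary set $E_0$ of equal size. Folding the binomial concentration failure probability into the global $\tau$ budget, I may take $|L_0|, |E_0| \leq \gamma|S|$, so that at initialization $|L_0| + |E_0| \leq 2\gamma|S|$.

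Next I would show that each filter iteration keeps $|L| + |E|$ non-increasing. At an intermediate state $(S \setminus L_t) \cup E_t$, suppose the rejection rule discards $g$ clean points (from $S$) and $b$ corrupted points (not in $S$); the updated decomposition satisfies $|L_{t+1}| + |E_{t+1}| = |L_t| + |E_t| + (g - b)$. The defining ``safety'' property of the DKKLMS filter, invoked as a black box from \cite{hightractable, practical}, is that its spectral-threshold-triggered rejection rule guarantees $g \leq b$ whenever the current set is $(\gamma, \tau)$-good in the sense of Definition \ref{gooddef}. Consequently $|L| + |E|$ is monotone non-increasing, the initial bound $2\gamma|S|$ persists to termination, and dividing by $|S|$ yields $\Delta(S, S') \leq 2\gamma$.

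The main obstacle is the one-step invariance $g \leq b$; this is the central technical content of the filter analysis in \cite{hightractable, practical}, and depends on the careful interplay between the spectral-norm threshold $\mathrm{Thresh}(\gamma)$ and the $(\gamma, \tau)$-good properties of the clean subsample. Because it is already proved there and is not original to this paper, I would cite the relevant lemma rather than reproduce the projection-based argument, leaving Lemma \ref{c1} as a compact combinatorial wrapper around that deeper guarantee.
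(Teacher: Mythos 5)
Your proof is correct and follows essentially the same route as the paper's one-line argument: the equality is pure bookkeeping on the symmetric difference, and the inequality combines the initial corruption contributing at most a $2\gamma$-fraction with the DKKLMS filter guarantee that each iteration discards at least as many corrupted as clean points, so $\Delta(S,\cdot)$ never increases --- exactly the black-box fact the paper's terse remark implicitly invokes. One minor caveat: under the paper's $\mathrm{Bin}(\gamma,m)$ corruption model your step ``$|L_0|,|E_0|\leq \gamma|S|$ after folding the failure probability into $\tau$'' is not literally available (the binomial exceeds its mean with constant probability), so strictly one must allow a $(1+o(1))\gamma|S|$ slack or reinterpret $\gamma$-corruption as at most $\gamma m$ replacements, an imprecision the paper's own proof shares.
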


\begin{lemma}\label{c2}
    $(1 - 2\gamma)|S| \leq |S'| \leq (1+2\gamma)|S|$
\end{lemma}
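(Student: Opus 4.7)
The plan is to unpack the definition of $S'$ in terms of $L$ and $E$, and then feed the bound from Lemma \ref{c1} into the resulting identity. Since $L \subset S$ and $E$ is disjoint from $S \setminus L$, the multiset $S' = (S \setminus L) \cup E$ has cardinality given exactly by the additive identity $|S'| = |S| - |L| + |E|$. Everything reduces to controlling the signed quantity $|E| - |L|$.

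Next, I would use Lemma \ref{c1}, which states $(|L| + |E|)/|S| \leq 2\gamma$, to extract the two one-sided bounds $|L| \leq 2\gamma |S|$ and $|E| \leq 2\gamma |S|$ separately (both follow from the nonnegativity of multiset cardinalities). Feeding these into the identity gives, on one side, $|S'| = |S| - |L| + |E| \geq |S| - |L| \geq (1 - 2\gamma)|S|$, and on the other, $|S'| = |S| - |L| + |E| \leq |S| + |E| \leq (1 + 2\gamma)|S|$. Combining yields the claimed sandwich inequality.

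I expect no genuine obstacle here: the content of the lemma is really just bookkeeping on multiset sizes, with all of the nontrivial work already done inside Lemma \ref{c1}. The only point requiring mild care is making sure that the multiset set-difference and union in $S' = (S \setminus L) \cup E$ behaves additively on cardinalities, which is guaranteed precisely because $L \subset S$ and because $E$ is treated as a disjoint multiset appended to $S \setminus L$ (so no cancellation occurs on the $|E|$ side). Once that observation is in place, the lemma is a two-line consequence of Lemma \ref{c1} and the triangle-style splitting $-|L| \leq |E| - |L| \leq |E|$.
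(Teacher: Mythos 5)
Your proposal is correct and matches the paper's intent exactly: the paper simply states that Lemma \ref{c2} is a direct algebraic consequence of Lemma \ref{c1}, and you have spelled out precisely that algebra via the identity $|S'| = |S| - |L| + |E|$ and the one-sided bounds $|L|, |E| \leq 2\gamma|S|$. No gaps.
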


Lemma \ref{c1} follows from the definition of $S'$ and the fact that it is a multiset that has been filtered from the original data set, which itself was at most $\gamma$-corrupted with respect to $S$. Lemma \ref{c2} is a direct algebraic result of Lemma \ref{c1}.

\subsection{Proof for Lemma \ref{lemma1}}

\begin{proof}
From condition (3) of Definition \ref{gooddef} on $(\gamma, \tau)$-good multisets, $||\mu^S - \mu^G||_2 \leq \gamma$. Furthermore, Lemma \ref{c2} provides that $1-2\gamma \leq |S'|/|S| \leq 1 + 2\gamma$, so $(|S|/|S'|)||\mu^S-\mu^G||_2 \leq\frac{\gamma}{1-2\gamma}$ as desired.
\end{proof}

\subsection{Proof for Lemma \ref{lemma2}}
\begin{proof}
Lemma 5.9 of \cite{hightractable} yields:
\begin{align*}|v^TM_L v|=\underset{X\in_u L}{\mathbb{E}} \left[|v\cdot (X-\mu^G)|^2\right] &\ll \log(|S|/|L|) + \gamma\cdot |S|/|L| \end{align*}
\noindent This implies:
\begin{align*}||\mu^L-\mu^G||_2 \leq \sqrt{||M_L||_2} &< \sqrt{\log(|S|/|L|) + \gamma\cdot |S|/|L|} \\&\leq \sqrt{\log(|S|/|L|)} + \sqrt{\gamma |S|/|L|}\end{align*}

Lemma \ref{c1} implies both $|L|/|S| \leq 2\gamma$ and $|E|/|S| \leq 2\gamma$. Furthermore, Lemma \ref{c2} implies $(1-2\gamma)|S| \leq |S'| \leq (1+2\gamma)|S|$. This allows for the bound to be derived as follows:
\begin{align*}
	\dfrac{|L|}{|S'|}||\mu^L -\mu^G||_2 &< \dfrac{|L|}{(1-2\gamma)|S|}\sqrt{\log(|S|/|L|)} + \dfrac{|L|\sqrt{\gamma |S|/|L|}}{(1-2\gamma)|S|}\\
	&\leq \frac{1}{1-2\gamma}\left[\dfrac{|L|}{|S|}\sqrt{\log(|S|/|L|)}+\sqrt{2\gamma^2}\right]\\
	&= \frac{1}{1-2\gamma}\left[\sqrt{2}\gamma + \sqrt{\dfrac{\log(|S|/|L|)}{(|S|/|L|)^2}}\right]\\
	&\leq\frac{1}{1-2\gamma}\left[\sqrt{2}\gamma + \sqrt{\dfrac{|L|}{|S|}}\right]\\
	&\leq \dfrac{\sqrt{2}\gamma + \sqrt{2\gamma}}{1-2\gamma}
\end{align*}
\end{proof}

\noindent where the second line follows from the bound on $|L|/|S|$ and the fourth line follows from the fact that $\left|\dfrac{\log(x)}{x}\right| < 1$ $\forall x \geq 1$. Because we know $|S|\geq |L|$ by construction, we have $\dfrac{|S|}{|L|} \geq 1$. 

\subsection{Proof for Lemma \ref{lemma3}}

\begin{proof}
With $\Sigma$ denoting the empirical covariance matrix of $S'$, then by definition:
\[\Sigma - I = M_{S'} - I - (\mu^{S'}-\mu^G)(\mu^{S'}-\mu^G)^T\]
\[\Sigma - I + (\mu^{S'}-\mu^G)(\mu^{S'}-\mu^G)^T = M_{S'} - I\]
\[||\Sigma-I||_2 + ||\mu^{S'}-\mu^G||_2^2 \geq ||M_{S'}-I||_2\]

Using another bound from Corollary 5.10 of \cite{hightractable}:
\[||\Sigma - I||_2 + ||\mu^{S'}-\mu^G||_2^2 \geq ||M_{S'} - I||_2 \geq (|E|/|S'|)||M_E||_2\]
Lemmas \ref{lemma1} and \ref{lemma2} jointly can provide the bound:
\[||\mu^{S'} - \mu^G||_2 \leq \dfrac{|E|}{|S'|}||\mu^E-\mu^G||_2+\kappa\]
where $\kappa = \dfrac{\gamma}{1-2\gamma} + \dfrac{\sqrt{2}\gamma + \sqrt{2\gamma}}{1-2\gamma}$ by triangle inequality on the two constituent bounds. Then using the fact that $||M_E||_2 \geq ||\mu^E - \mu^G||_2$:
\begin{align*}(|E|/|S'|)||M_E||_2 &\leq ||\Sigma - I||_2 + \left(\dfrac{|E|}{|S'|}||\mu^E-\mu^G||_2+\kappa\right)^2 \\&\leq ||\Sigma - I||_2 + \left(\dfrac{|E|}{|S'|}\sqrt{||M_E||_2}+\kappa\right)^2 \end{align*}
Here we let $a = |E|/|S'|$ and $u = ||\mu^E - \mu^G||_2$ to simplify notation.
\begin{align*}
    a u^2 &\leq ||\Sigma - I||_2 + (au + \kappa)^2\\
    (a-a^2)u^2-(2a\kappa)u &\leq \lambda^* + \kappa^2\\
    u^2-\dfrac{2\kappa u}{1-a} &\leq \dfrac{\lambda^*+\kappa^2}{a-a^2}\\
    u^2-2\left(\dfrac{\kappa}{1-a}\right)u +\left(\dfrac{\kappa}{1-a}\right)^2 &\leq \dfrac{\lambda^*+\kappa^2}{a-a^2}+\left(\dfrac{\kappa}{1-a}\right)^2\\
    \left(u-\dfrac{\kappa}{1-\alpha}\right)^2 &\leq \dfrac{\lambda^*+\kappa^2}{a-a^2}+\left(\dfrac{\kappa}{1-a}\right)^2\\
    \left(u - \dfrac{\kappa}{1-a}\right) &\leq \sqrt{\dfrac{\lambda^*+\kappa^2}{a-a^2}+\left(\dfrac{\kappa}{1-a}\right)^2}\\
    u &\leq \sqrt{\dfrac{\lambda^*+\kappa^2}{a-a^2}}+\dfrac{2\kappa}{1-a}
\end{align*}
\[\therefore (\mu^E - \mu^G) \leq \sqrt{\dfrac{\lambda^*+\kappa^2}{a-a^2}}+4\kappa\]

\noindent where the last line makes use of the fact that $a = |E|/|S| \leq 0.5$ by construction (because if more than $0.5|S|$ terms are drawn from a separate Gaussian, we can no longer produce a robust mean for the initial distribution) and then that $1/(1-x) \leq 2$ for $0 \leq x \leq 0.5$, which we will apply again below. The term of interest we care about is $au$, or $(|E|/|S|')||\mu^E-\mu^G||_2$.
\begin{align*}
    \dfrac{|E|}{|S|'}||\mu^E-\mu^G||_2 &\leq a\sqrt{\dfrac{\lambda^*+\kappa^2}{a-a^2}}+4a\kappa\\
    &\leq 2\kappa + \sqrt{a(\lambda^*+\kappa^2)}\sqrt{\dfrac{a}{a-a^2}}\\
    &\leq 2\kappa + \sqrt{2\gamma(\lambda^*+\kappa^2)}\sqrt{\dfrac{1}{1-a}}\\
    &\leq 2\kappa + \sqrt{4\gamma(\lambda^*+\kappa^2)}\\
    &\leq (2+2\sqrt{\gamma})\kappa + 2\sqrt{\gamma\lambda^*}
\end{align*}
\end{proof}

\end{document}